\documentclass[lettersize,journal]{IEEEtran}
\usepackage{amsmath,amsfonts}
\usepackage{algorithmic}
\usepackage{algorithm}
\usepackage{array}
\usepackage[caption=false,font=normalsize,labelfont=sf,textfont=sf]{subfig}
\usepackage{textcomp}
\usepackage{stfloats}
\usepackage{url}
\usepackage{verbatim}
\usepackage{graphicx}
\usepackage{cite}
\usepackage{amsthm}
\usepackage{times}
\usepackage{graphicx}

\newtheorem{theorem}{Theorem}
\usepackage[numbers]{natbib}
\usepackage{multicol}
\usepackage[bookmarks=true]{hyperref}
\usepackage{amssymb}
\usepackage{booktabs} 
\usepackage{multirow} 
\usepackage{amsmath}  
\usepackage{graphicx}
\usepackage{algorithm}
\usepackage{algorithmic}
\usepackage{bm}
\usepackage{tabularx, booktabs, multirow}

\usepackage{xcolor}
\definecolor{commentblue}{RGB}{40, 116, 166} 
\definecolor{keywordpurple}{RGB}{125, 60, 152} 
\definecolor{darkblue}{RGB}{30, 100, 170} 

\usepackage{tcolorbox}
\tcbuselibrary{listings, skins}
\usepackage{listings}
\usepackage[table]{xcolor}
\usepackage{array}
\usepackage{ragged2e}
\usepackage{xcolor}

\definecolor{promptorange}{RGB}{255,165,0}
\definecolor{outputblue}{RGB}{214,232,245}
\definecolor{reqbrown}{RGB}{160,140,120}
\usepackage{multirow}
\usepackage{booktabs}
\usepackage[table]{xcolor}
\usepackage{pifont}

\newcolumntype{P}{>{\RaggedRight\arraybackslash}p{\dimexpr\linewidth-2\tabcolsep-2\arrayrulewidth}}

\begin{document}

\title{Beyond Pixels: Learning Invariant Rewards  \\ for Real-World Robotics From a Few Demonstrations}

\author{
  \IEEEauthorblockN{
    Tengye Xu$^{1,2}$,\ 
    Yangting Sun$^{2}$,\ 
    Ziju Shen$^{2,4}$,\ 
    Guanqi Chen$^{1}$,\ 
    Zhen Fu$^{2,3}$,\ 
    Yizhou Chen$^{1}$,\ 
    Hua Chen$^{2,5\dagger}$,\ 
    Jia Pan$^{1\dagger}$
  }
  \IEEEauthorblockA{
    $^{1}$School of Computing and Data Science, The University of Hong Kong \quad
    $^{2}$LimX Dynamics Technology Co., Ltd \\
    $^{3}$Southern University of Science and Technology \quad
    $^{4}$Peking University \quad
    $^{5}$Zhejiang University \\[2pt]
  }
}

\maketitle

\begin{abstract}
Designing reward functions that generalize beyond controlled laboratory settings remains a fundamental challenge in reinforcement learning for robotics. In open-world manipulation problems, a single task can appear in numerous variants through different object instances, positions, and camera viewpoints. Recent vision-based reward models tend to memorize specific
pixel distributions and fail to generalize beyond their training conditions.
To address this, we propose a framework that learns invariant symbolic reward functions from as few as five demonstrations.
The insight is to shift from visual feature-fitting to the discovery of behavioral invariants: task-level properties that remain constant across diverse visual instantiations.
The framework has two coupled components: a structural reward formulation that encodes task-level strategies and physical constraints while preserving optimal policy invariance, and a hybrid symbolic-numerical procedure that distills these invariants from demonstrations without online interaction. 
Experiments on eight Meta-World tasks and three Franka manipulation tasks demonstrate  that our method achieves stronger process alignment and policy rollout ranking abilities compared to  baselines, accelerating downstream policy learning. 
Three real-world out-of-distribution experiments further show that the same learned reward generalizes zero-shot to position, viewpoint, and object variations, enabling a single reward representation to be reused across diverse task
variants in practice.

\end{abstract}

\begin{IEEEkeywords}
Reward Shaping, Generalization, Robotics, Real-World Reinforcement Learning, Optimal Policy Invariance.
\end{IEEEkeywords}

\section{Introduction}
Developing robust, dense reward functions for open-world manipulation is a long-standing goal in reinforcement learning and Embodied AI. 
In an  open-world setting, a unit task like opening a box is rarely visually identical twice, due to the inherent variability of object instances, spatial configurations, and camera perspectives.
While the semantic essence of the task typically remains unchanged, 
the aforementioned inherent variability makes handcrafted reward functions brittle and impractical to scale. Rather than hand-crafting rewards, modern approaches turn to learning reward functions automatically from demonstrations. 
The key requirement is generalization: a learned reward model should distill the behavioral invariants of a task class from only a handful of demonstrations, enabling it to generalize beyond the limited visual conditions seen during training to the infinite variability encountered in the real world.

The recent success of using Vision-Language Models (VLMs) in open-world understanding has motivated their use for automatic reward design. One line of work uses general-purpose VLMs to score task progress 
directly from text descriptions and images~ \cite{rocamonde2023vision, ma2024vision, kim2025subtask}.
While these models offer broad 
semantic coverage, their scores lack temporal consistency and 
numerical calibration, limiting their reliability as dense rewards 
for policy optimization.

A second line 
~\citep{sontakke2023roboclip,kim2025subtask,hung2024victor,ye2023reinforcement,ayalew2025progressor,chen2026topreward,liang2026robometer,lee2026roboreward}
trains dedicated reward models on large-scale robotic 
datasets, with 
representative examples including LIV \citep{ma2023liv}, VLC \citep{alakuijala2024video}, ReWiND \citep{zhang2025rewind}, and SARM \citep{chen2025sarm}.
These models produce more accurate scores but are 
prone to visual overfitting, usually requiring in-distribution test tasks, and fail to correctly score the images, which are semantically identical but visually different.

\begin{figure}[t]
\begin{center}

\includegraphics[width=1\linewidth]{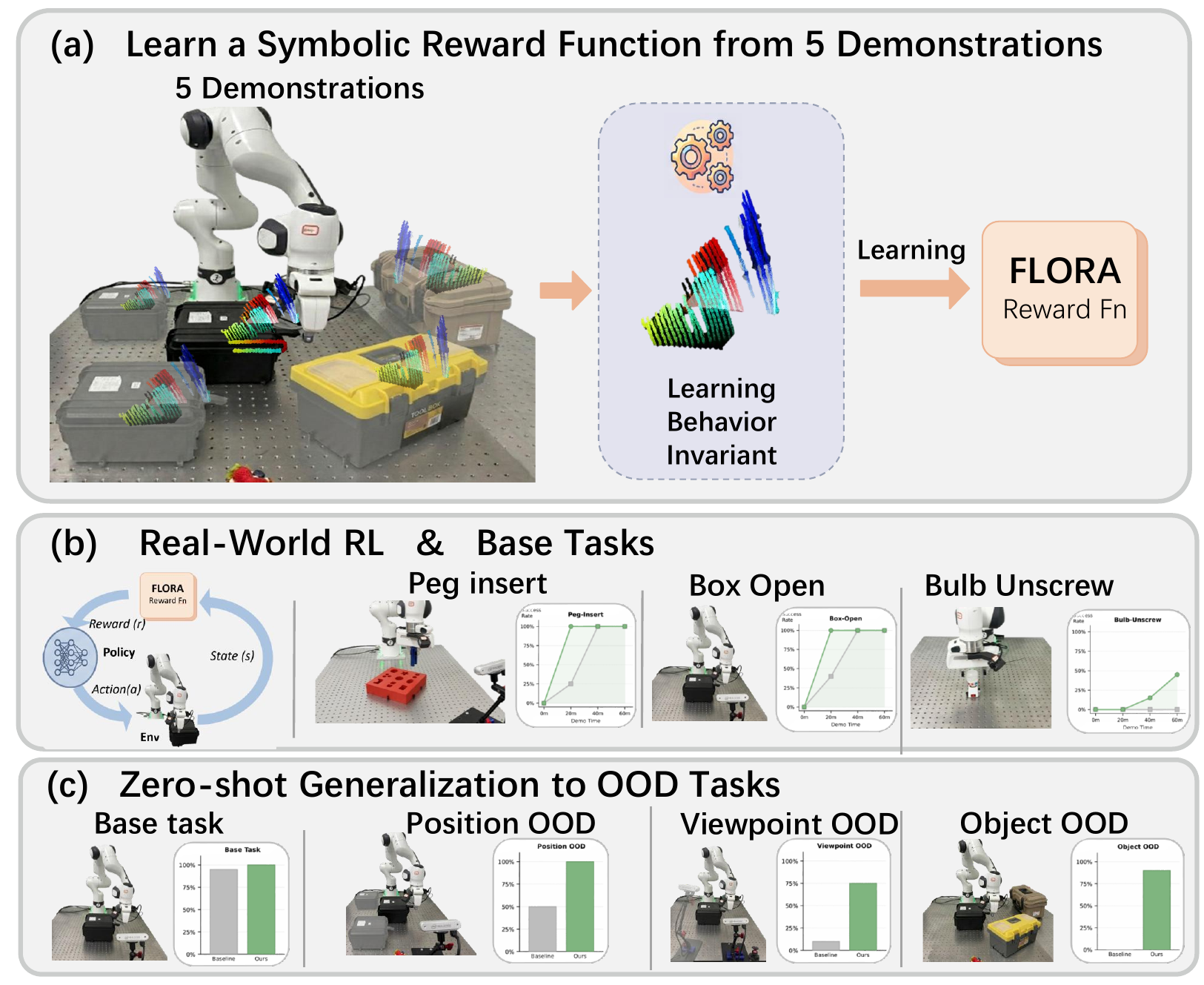}
\end{center}
\vspace{-0.5cm}
\caption{
Overview.
(a)~Five demonstrations are used to learn a symbolic reward function
that captures the behavioral invariants of the task rather than its
visual appearance.
(b)~The learned reward model guides real-world RL on a Franka arm
across three manipulation tasks: Peg-Insert, Box-Open, and
Bulb-Unscrew.
(c)~Without any retraining, the same reward model transfers zero-shot
to position, viewpoint, and object OOD variants.
\label{fig:overview flow generation}}
\vspace{-0.3cm}
\end{figure}

A more principled way to capture behavioral invariants is through programmatic reward functions. Unlike black-box neural networks, code-based rewards can explicitly represent physical laws and logical constraints that hold across task variants. Recent efforts, such as Eureka \citep{ma2023eureka} and Text2Reward \citep{xie2023text2reward}, have utilized Large Language Models (LLMs) to automatically synthesize reward functions. However, these methods generally depend on privileged state information, such as ground-truth 6D object poses, and  usually require extensive online interaction within simulation. These requirements are often infeasible in real-world settings, where access to an oracle state is  unavailable and interaction budgets are strictly limited by hardware safety and time.

Furthermore, existing VLM-based and programmatic methods often lack theoretical guarantees regarding optimal policy invariance.  This can lead to the phenomenon of reward hacking, where an agent maximizes the reward through unintended or suboptimal behaviors. For example, in a pick-and-place task, an agent may learn to hover infinitely near a target region rather than completing the placement. A robust reward framework must provide dense guidance without altering the task's original optimal policy.

Consequently, how to extract reward functions that are invariant to visual appearance, grounded in task logic, and provably policy-invariant, using only a small number of offline demonstrations without privileged state access, remains an open and critical challenge.

We address these challenges by introducing Flow-based Language-driven Offline Reward Adaptation (FLORA), a framework designed to learn invariant symbolic reward functions from as few as five demonstrations. 
To eliminate the dependency on privileged state information, we develop a Flow-Generator that extracts task-relevant motion flows from raw images to serve as a robust state representation.  To ensure theoretical rigor and prevent reward hacking, we adopt a Potential-Based Reward Shaping (PBRS) formulation. Within this framework, our synthesized programmatic functions serve as \textit{potential functions}; the final reward signal is derived via a PBRS-style post-processing module, which mathematically guarantees optimal policy invariance. With all these together, FLORA introduces a new theoretically grounded reward formulation for real-world Reinforcement Learning (RL), which can explicitly  encode task-level strategies and physical constraints from raw visual inputs.

Crucially, to automatically learn this reward model in a low-data regime, we propose a hybrid symbolic-numerical learning framework. This method combines discrete symbolic search with continuous parameter optimization, allowing FLORA to discover robust, invariant reward functions from as few as five demonstrations.
In summary,  our contributions are as follows:
\begin{itemize}
  \item We introduce {FLORA}, a novel framework for learning invariant symbolic
    reward functions from as few as five visual demonstrations,
    without privileged state access or online environment interaction.
 \item We propose a structural reward formulation that 
 decomposes the
    reward model into a Flow-Generator, a symbolic potential function,
    and a PBRS-MS module. The formulation carries a formal guarantee
    of optimal policy invariance,
    eliminating reward hacking by construction.
 
  \item We develop a hybrid symbolic-numerical learning procedure that
  jointly optimizes the discrete program structure via LLM reflection
    and the continuous parameters via Bayesian Optimization, using only
    five  demonstrations.
    \item   Extensive {simulation and real-world experiments}
    show that FLORA outperforms vision-language reward baselines
    in process alignment and policy learning efficiency, and transfers
    zero-shot to position, viewpoint, and object variations.

\end{itemize}

\section{Related Work}

\subsection{Classical Reward Learning}
Classical approaches to reward learning include inverse reinforcement learning (IRL) 
\citep{ng2000algorithms,arora2021survey,abbeel2004apprenticeship,ziebart2008maximum,finn2016guided}
, which infers rewards from expert demonstrations, and preference-based reward learning \citep{hejna2023few,christiano2017deep,sadigh2017active}, which uses human comparisons to align agent behavior with user intent. IRL provides a principled formulation but often depends on predefined reward features, strong modeling assumptions, or a large number of demonstrations. Preference-based methods reduce the need for explicit reward design, but they usually require repeated human feedback during training or adaptation. These requirements make both approaches difficult to deploy in the low-data, real-robot setting, where only a few visual demonstrations are available and additional interaction is costly.

\subsection{Vision-Language Reward Models}
Prior work has also explored using VLMs to generate reward signals \citep{rocamonde2023vision, ma2024vision, kim2025subtask} or utilizing VLMs to provide preference rankings  between image pairs \citep{wang2024rl,venkataraman2024real,ghosh2025preference}. Although VLMs provide broad semantic priors, their scores are not optimized to be temporally smooth or numerically calibrated across consecutive robot states. This makes them useful for coarse semantic judgment but less reliable as dense rewards for policy optimization, where small inconsistencies can change the gradient of exploration.
Another line of work \citep{sontakke2023roboclip,kim2025subtask,hung2024victor,ye2023reinforcement,ayalew2025progressor,ma2023liv,alakuijala2024video,zhang2025rewind,chen2025sarm} trains models on  large-scale robotic demonstration datasets. While these models provide more accurate scores, they are  prone to visual overfitting, failing to correctly score scenes that are semantically identical but visually different. Furthermore, these methods typically impose a heavy data burden; for example, SARM \citep{chen2025sarm} requires over 200 hours of demonstrations for tasks like T-shirt folding, while ReWiND \citep{zhang2025rewind} necessitates demonstrations across 20 target-environment tasks to ensure coverage. 

A parallel line of work \citep{lee2026roboreward, liang2026robometer} trains general-purpose reward models on large-scale datasets containing both successful and failure trajectories. While these models achieve strong discrimination on in-distribution tasks, they rely on large-scale pretraining and task-specific LoRA fine-tuning before deployment, which introduces a nontrivial computational cost (e.g., ~8 hours on an NVIDIA RTX A6000).
FLORA targets a setting where only a handful of demonstrations are available and the reward model must generalize zero-shot to unseen task variants without any further training. Furthermore, RoboReward  \citep{lee2026roboreward} produces discrete reward signals, whereas our framework provides dense, continuous shaping signals grounded in PBRS theory, a property critical for guiding exploration in contact-rich manipulation.

\subsection{Programmatic Reward Functions}
Recent methods \citep{hu2023language,yu2023language,ma2023eureka,xie2023text2reward,ma2024dreureka,zeng2024video2reward} use LLMs or VLMs to synthesize executable reward code, encoding task-level heuristics and physical priors directly in programs. However, they typically assume access to privileged state information and often require extensive online interaction in simulation to refine the generated rewards. These assumptions are difficult to satisfy in real-world settings, where oracle states are unavailable and interaction budgets are tightly constrained. 
FLORA shares the goal of obtaining interpretable programmatic rewards, but differs in the source of state information and the optimization setting: it learns symbolic potential functions over motion-flow representations rather than privileged simulator variables, and optimizes their structure and parameters entirely offline from a small demonstration set.

\subsection{Motion Flow in Manipulation}
Motion flow has emerged as a robust representation for cross-domain \citep{wen2023any,xu2024flow,yuan2024general,zhi20253dflowaction,xu2025a0} 
transfer. By abstracting raw images into object-centric trajectories, motion flow can provide a practical visual proxy for task-relevant state variables when privileged object poses are unavailable. While some recent approaches like HUDOR \citep{guzeyhudor},  GENFLOWRL \citep{yu2025genflowrl} utilize motion flow for reward design, they typically rely on {handcrafted functions} that require human expertise and limit scalability across task variants. In contrast, our work  proposes  a hybrid symbolic-numerical learning framework that automatically learns symbolic reward functions from a few demonstrations and guarantees optimal policy invariance.

\begin{figure*}[t]
\begin{center}
\includegraphics[width=0.92\linewidth]{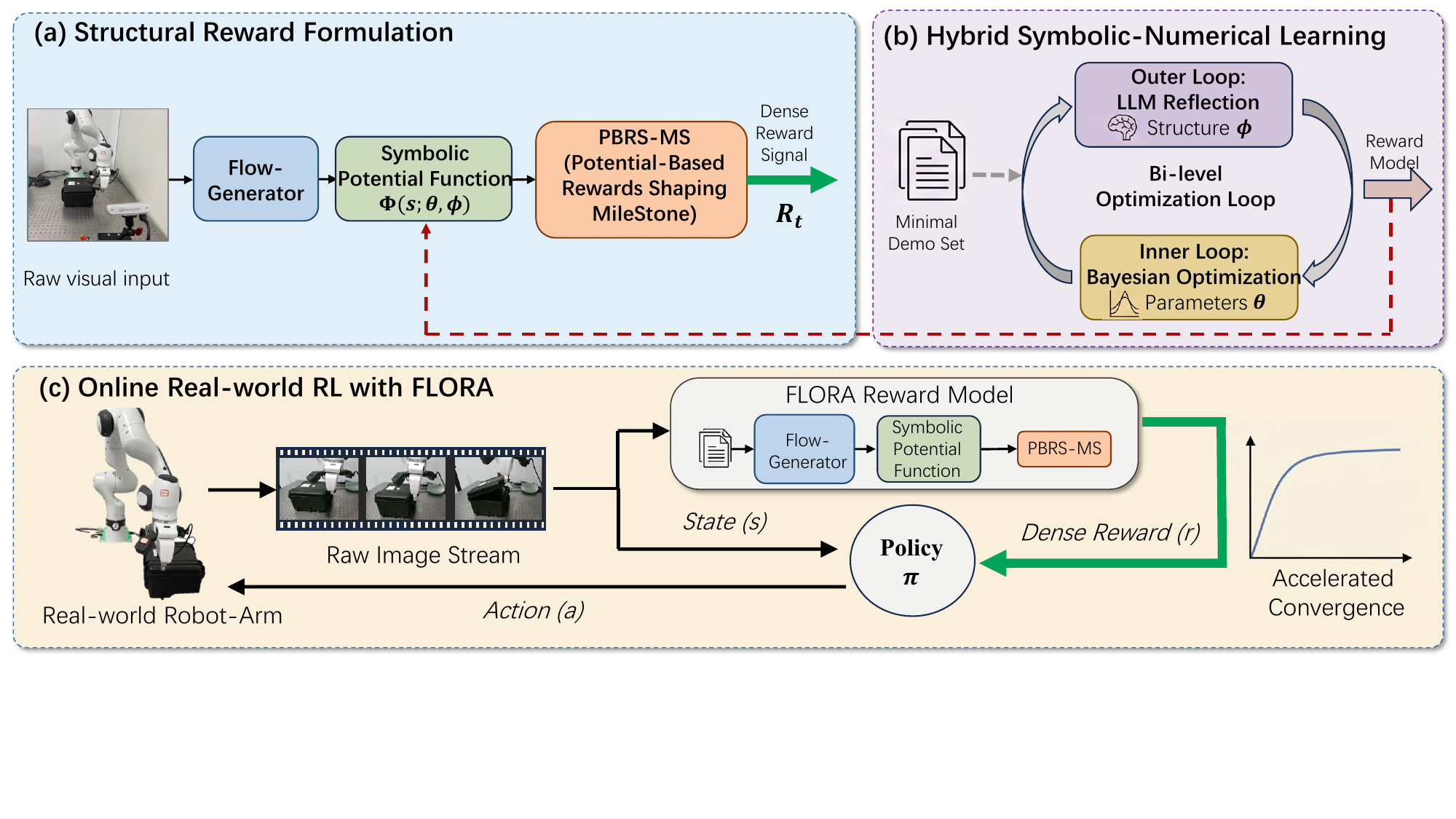}
\end{center}
\vspace{-2.3cm}
\caption{Framework Overview: 
(a) \textbf{Structural Reward Formulation}: A pipeline that maps raw visual input to robust reward signals. It comprises a Flow-Generator for flow generation, a symbolic potential function for  progress estimation, and a Potential-Based Reward Shaping-MileStone (PBRS-MS) module  to ensure optimal policy invariance and signal stability. 
(b) \textbf{Hybrid Symbolic-Numerical Learning}: A bi-level optimization loop that jointly discovers the optimal symbolic program structure $\phi$ (Outer Loop via LLM reflection) and its continuous parameters $\theta$ (Inner Loop via Bayesian Optimization) using a minimal demonstration set. 
(c) \textbf{Online Real-world RL}: Deployment of the synthesized reward model to provide dense guidance. The framework computes dense rewards from raw image streams  to accelerate policy convergence in complex manipulation tasks.}
\label{fig: Overal framework}
\vspace{-0.2cm}
\end{figure*}

\subsection{Optimality-Preserving Reward Shaping}

Reward shaping aims to accelerate policy learning by providing dense, informative signals. Among various methods, Potential-Based Reward Shaping (PBRS) \citep{ng1999policy} is a  widely used  framework due to its theoretical guarantee of optimal policy invariance, and has been  used in robotics tasks \citep{ye2023reinforcement,yin2025rapidly}. However, traditional PBRS does not provide a mechanism for designing task-specific potential functions
and suffers from potential
collapse in long-horizon tasks.
FLORA learns the potential
function automatically from demonstrations and introduces a
milestone-augmented formulation that provably resolves collapse
while preserving the invariance guarantee.

\section{Background and problem formulation}

\subsection{Reinforcement Learning and PBRS}
A  manipulation task is modeled as a Markov Decision Process (MDP)
$\mathcal{M} = (\mathcal{S}, \mathcal{A}, P, R, \gamma)$,
where $\mathcal{S}$ and $\mathcal{A}$ are the state and action spaces,
$P: \mathcal{S}\times\mathcal{A}\to\mathcal{S}$ the transition dynamics,
$R: \mathcal{S}\times\mathcal{A}\times\mathcal{S}\to\mathbb{R}$ the reward,
and $\gamma\in(0,1]$ the discount factor.
The agent seeks a policy $\pi$ maximizing
$G = \mathbb{E}[\sum_{t=0}^{\infty}\gamma^t R_t]$.
In real-world manipulation, we  address the sparse reward setting: the environment returns a binary
signal $R\in\{0,1\}$, which provides little gradient for exploration.

To mitigate reward sparsity without introducing reward hacking, we employ
PBRS~\citep{ng1999policy}.
A shaped reward $R'(s_t,a_t,s_{t+1}) = R(s_t,a_t,s_{t+1})
+ \gamma\phi(s_{t+1}) - \phi(s_t)$
augments $R$ with a potential function $\phi:\mathcal{S}\to\mathbb{R}$.
PBRS guarantees \emph{optimal policy invariance}: any optimal policy for
the shaped MDP $\mathcal{M}' = (\mathcal{S},\mathcal{A},R',P,\gamma)$
is also optimal for $\mathcal{M}$, so the shaping signal accelerates
convergence without altering the intended objective.

\subsection{Problem Formulation}
\label{sec: problem formulation}
Given a sparse-reward MDP and a minimal set of visual demonstrations $\mathcal{D}_{\text{demo}}$,  the goal  is to learn a symbolic reward function $R'$  that provides dense, reliable reward signals from raw visual observations. 
To be effective for open-world manipulation, $R'$ must
satisfy two constraints: \emph{behavioral invariance}, meaning the reward
is grounded in the task's logical structure rather than absolute pixel
values to enable generalization to new task variants; and \emph{optimality invariance}, meaning $R'$ strictly preserves
the optimal policy of $\mathcal{M}$, preventing reward hacking.

\section{Structural Reward Formulation}
\label{sec:methodology}

Rather than learning a reward function end-to-end from images, 
FLORA encodes the Behavioral and Optimality Invariance constraints 
directly into the reward architecture, converting a constrained 
optimization problem into a tractable unconstrained one.

The formulation has three components, as illustrated in 
Fig.~ \ref{fig: Overal framework} (a).
A \emph{Flow-Generator} abstracts raw visual input into 
object-centric motion flows, restricting the potential function's 
input space to task-relevant spatial relations and satisfying 
behavioral invariance by design.
A \emph{Symbolic Potential Function} maps these flows to scalar 
progress values through programmatic logic that encodes task 
strategies and physical constraints.
A \emph{PBRS-MS module} wraps the potential within a 
potential-difference reward structure, stabilizing training against 
potential collapse while formally guaranteeing optimal policy invariance.
This section details these three components; 
Section~\ref{sec:method_discovery} then describes a {Hybrid Symbolic-Numerical Learning Framework}, which dictates \textit{how} the potential function is  learned from as few as five demonstrations.

\begin{figure}[t]
\begin{center}

\includegraphics[width=0.95\linewidth]{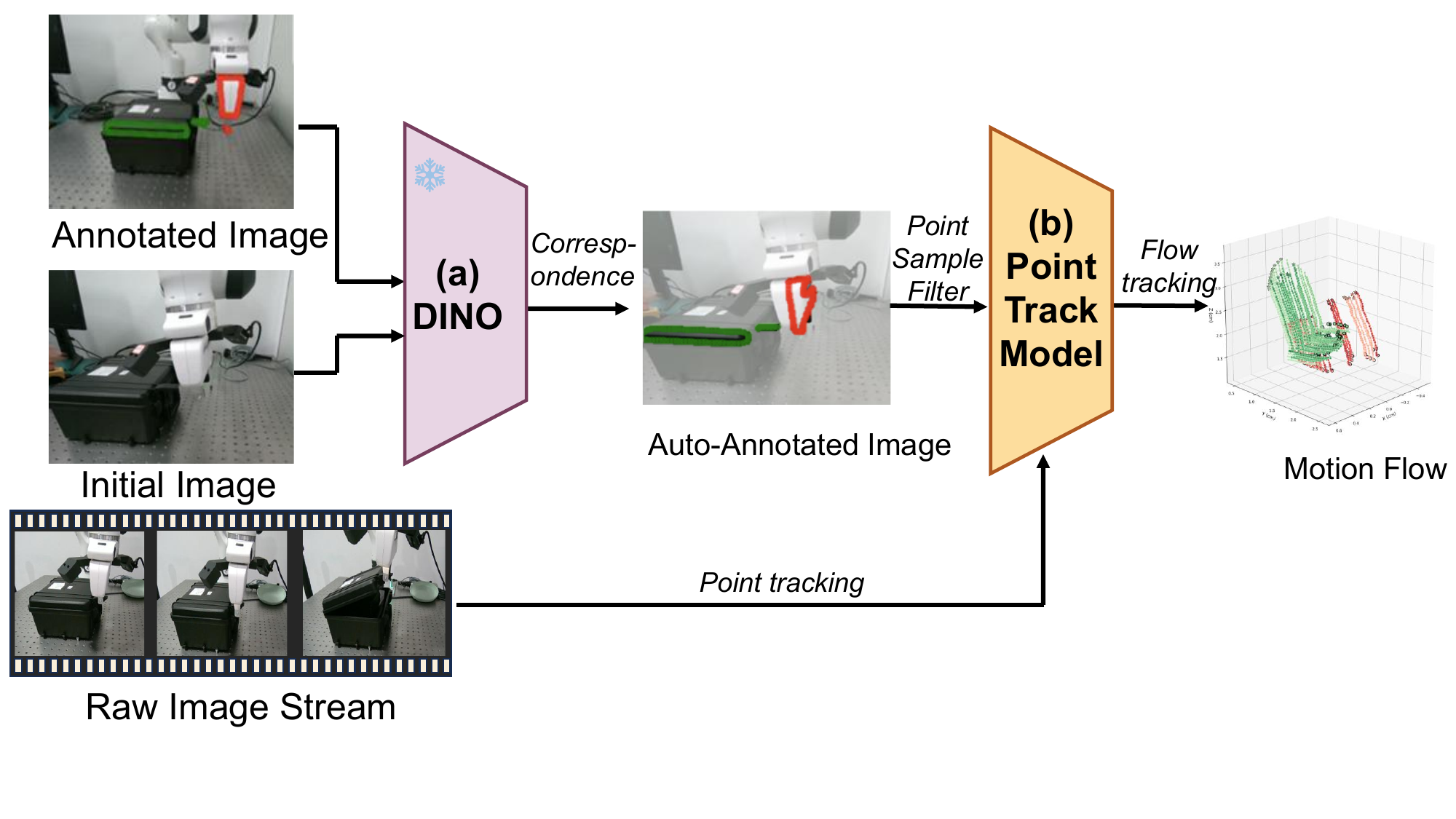}
\end{center}
\vspace{-0.95cm}
\caption{The flow generation procedure.\label{fig:flow generation}}
\vspace{-0.3cm}
\end{figure}

\subsection{Flow-Generator: Grounding via Motion Flow}
\label{subsection:Flow-Generator}
To satisfy the \textit{behavioral invariance} constraint, the reward model must be grounded in task-relevant relative spatial relationships between objects (e.g., the alignment between a gripper and a handle) rather than raw pixel intensities. We propose a {Flow-Generator} that transforms high-dimensional visual observations into a task-relevant motion flow representation $o_t$. This ensures that the downstream potential function  perceives the task’s logical state regardless of visual distractors or scene variations. 
As shown in Fig.~\ref{fig:flow generation}, the pipeline operates in three stages.
Regions of Interest (RoIs), such as  the robot gripper, the manipulated object,
and the target,  are identified once per task class in a reference image, either
manually or via Grounding DINO~\citep{liu2023grounding}.
Since re-running the detector at every episode is costly and brittle under
appearance shift, we use the dense correspondence model
DINOv2~\citep{oquab2023dinov2} to re-locate these RoIs in the initial frame
of each new episode.
A 3D point tracker (e.g., TAPIP3D~\citep{zhang2025tapip3d}) then tracks
keypoints within each RoI throughout the episode; all tracked points are then
expressed in a canonical object-centric coordinate frame to achieve spatial invariance.

\subsection{Symbolic Potential Function}

With the input space canonicalized by the Flow-Generator,  we define the core of our reward model as a symbolic potential function $\phi$, parameterized by $\theta$. For each timestep $t$, this function maps the current motion flow $o_t$ and the initial motion flow $o_0$ to a scalar potential $p_t$ and an auxiliary task stage $\hat{\mu}_t$:
\begin{equation}
(p_t, \hat{\mu}_t) = \phi(o_t, o_0; \theta).
\end{equation}
The potential function $\phi$ is decoupled into two components: its symbolic structure and numerical parameters.
The \emph{symbolic structure} $\phi \in \Phi$ is the programmatic logic of the
reward, encoding semantic reward structure and task-level strategy; it is
automatically discovered by an LLM (Section~\ref{sec:method_discovery}).
The \emph{numerical parameters} $\theta \in \Theta$ are the continuous
coefficients embedded within the code, such as  thresholds, weighting factors,
and gain coefficients, and are optimized by Bayesian Optimization
(Section~\ref{sec:method_discovery}).

\subsection{PBRS-MS: Theoretical Guarantees for Manipulation}

To satisfy the Optimality Invariance constraint, we wrap the symbolic potential within a PBRS form. However,  directly applying PBRS in high-dimensional manipulation tasks can lead to training instabilities due to potential collapse.
In manipulation, near-miss states (e.g., dropping an object at the final step) are adjacent to success states.  Due to the telescoping property of PBRS, the cumulative shaped return over $N$ steps is dominated by the boundary potentials $ \phi(s_N)$:
$G = \sum_{t=0}^{N-1} \gamma^t (R_t + \gamma \phi(s_{t+1}) - \phi(s_t)) = \sum_{t=0}^{N-1} \gamma^t R_t + \gamma^N \phi(s_N) - \phi(s_0).$
Consequently, a trajectory that lifts an object but drops it at the terminal step yields a return nearly identical to one that never initiated a grasp, as both terminate in low-potential states $s_N$. This lack of differentiation prevents the $Q$-function from correctly assigning credit to partial successes, leading to severe training instability. Furthermore, local basins and plateaus in $\phi$ can generate uninformative gradients that trap exploration, a failure mode particularly prevalent in long-horizon manipulation.

To provide a robust global signal that resists  potential collapse, we augment PBRS with a milestone-based progress tracker. The key idea is to lock in partial progress so that late-stage failures do not erase earlier achievements.

\paragraph{Milestone Definition} 
We partition the potential range $[0, \phi_{\max}]$ into $K$ discrete levels using uniformly spaced thresholds $\kappa_k = \frac{k}{K}$ for $k \in \{1, \dots, K\}$. A state $s$ is said to have \emph{reached} milestone $k$ if its potential satisfies $\phi(s) \ge \kappa_k \cdot \phi_{\max}$.

We
augment the state space with a monotonic milestone index  $m_t \in \{0, 1, \dots, K\}$ that records the highest milestone achieved up to time $t$:
$    m_t = \max\left( m_{t-1},\; \max\{ k \mid \phi(s_t) \ge \kappa_k \cdot \phi_{\max} \} \right), \quad m_0 = 0.$  Thus we define a {Shaped Augmented MDP} $\tilde{M}' = (\tilde{\mathcal{S}}, \mathcal{A}, \tilde{P}, \tilde{R}', \gamma)$:  The state space is $\tilde{\mathcal{S}} = \mathcal{S} \times \{0, \dots, K\}$.     The transition dynamics $\tilde{P}$ are defined as:
    $ \tilde{P}(s', m' | s, m, a) = P(s' | s, a) \cdot \mathbb{P}(m' | s', m) $
    where the milestone transition $\mathbb{P}(m' | s', m)$ is deterministic.

\paragraph{Global Potential and Shaped reward} We define a cumulative potential over milestones: $\Psi(m) = \sum_{k=1}^{m} \hat{R}_k$, where $\hat{R}_k > 0$ is the reward bonus for reaching milestone $k$. The final shaped reward integrates both local and global signals:
\begin{equation}
    R_t' = R_t + \underbrace{\gamma \phi(s_{t+1}) - \phi(s_t)}_{\text{Local PBRS}} + \underbrace{\gamma \Psi(m_{t+1}) - \Psi(m_t)}_{\text{Global MS}}.
    \label{eq:final_shaped_reward}
\end{equation}

Since both $\phi$ and $\Psi$ are 
defined over this augmented space via potential-based shaping, 
the following theorem holds:

\begin{figure}[t]
\begin{center}
\includegraphics[width=0.8\linewidth]{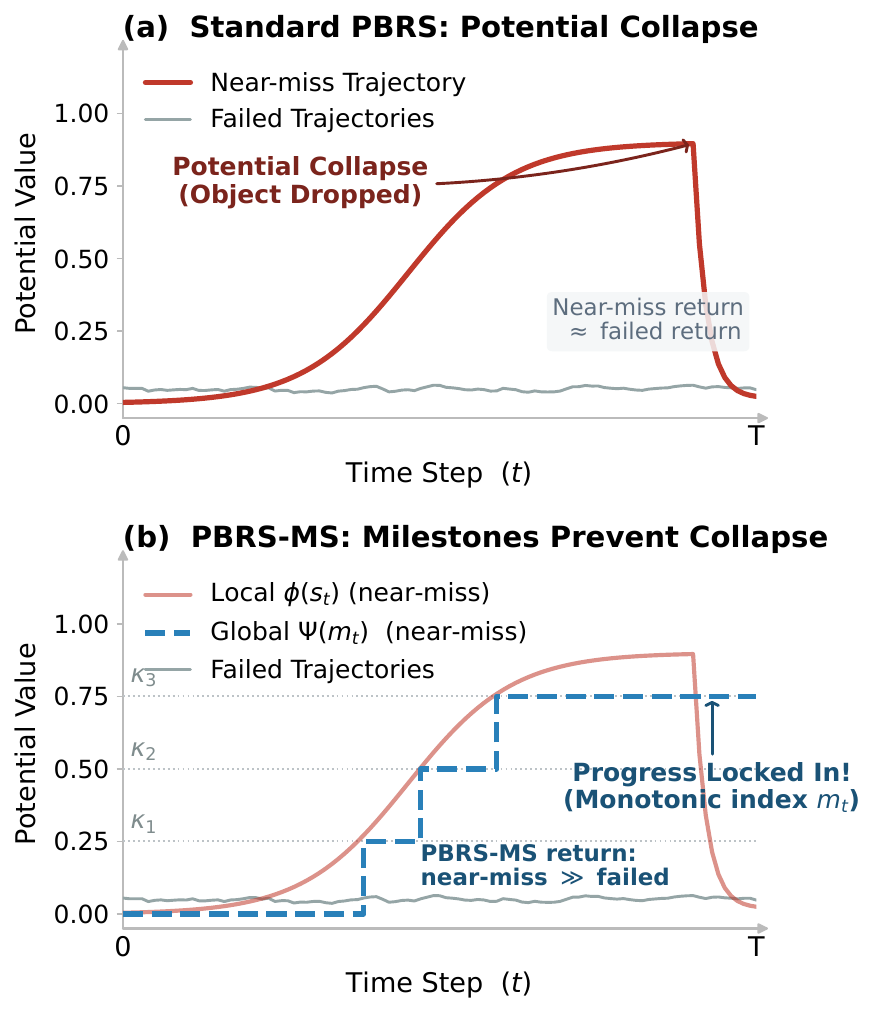}
\end{center}
\vspace{-0.6cm}
\caption{Potential collapse under standard PBRS and its resolution by our proposed PBRS-MS.
}
\label{fig: pbms}
\vspace{-0.3cm}
\end{figure}

\begin{theorem}[Policy Invariance of PBRS-MS]
For any discount factor $\gamma \in [0, 1)$, any optimal policy $\tilde{\pi}^*$ in the shaped augmented MDP $\tilde{M}'$ is also an optimal policy in the original MDP $M$.
\end{theorem}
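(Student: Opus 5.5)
The plan is to prove the theorem in two steps: first a lifting step from $M$ to the unshaped augmented MDP, then an application of the classical potential-based shaping argument of Ng et al.~\citep{ng1999policy} on the augmented state space $\tilde{\mathcal{S}} = \mathcal{S}\times\{0,\dots,K\}$ with the combined potential
\[
F(s,m) = \phi(s) + \Psi(m).
\]
With this choice, the shaped reward in Eq.~\eqref{eq:final_shaped_reward} is exactly
\[
\tilde R'(\tilde s_t,a_t,\tilde s_{t+1}) = R_t + \gamma F(\tilde s_{t+1}) - F(\tilde s_t),
\]
because both the local and global terms are potential differences over augmented states. Let $\tilde M = (\tilde{\mathcal{S}},\mathcal{A},\tilde P,\tilde R,\gamma)$ be the \emph{unshaped} augmented MDP with $\tilde R(s,m,a,s') = R(s,a,s')$. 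The chain of implications is then: optimal in $\tilde M'$ $\Rightarrow$ optimal in $\tilde M$ $\Rightarrow$ optimal in $M$.

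\textbf{Step 1 (shaping on the augmented MDP).} For $\gamma<1$ and bounded $F$, telescoping gives, for every augmented policy $\tilde\pi$ and state $\tilde s$,
\[
Q^{\tilde\pi}_{\tilde M'}(\tilde s,a) = Q^{\tilde\pi}_{\tilde M}(\tilde s,a) - F(\tilde s).
\]
This holds because the discounted sum $\sum_t \gamma^t(\gamma F_{t+1}-F_t)$ collapses to $-F(\tilde s_0)$, with the tail term $\gamma^N F(\tilde s_N)\to 0$. The offset does not depend on the action, so the argmax sets coincide at every augmented state. Hence $\tilde M'$ and $\tilde M$ have the same optimal policies.

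Boundedness of $F$ is where the hypotheses are actually used:
\begin{itemize}
\item $\Psi$ takes finitely many values, since $m\in\{0,\dots,K\}$ and each $\hat R_k$ is finite.
\item $\phi$ is bounded by $\phi_{\max}$, since its range is partitioned as $[0,\phi_{\max}]$.
\end{itemize}
The restriction $\gamma<1$ is also needed here: with $\gamma=1$ the tail term need not vanish and the argument breaks.

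\textbf{Step 2 (the milestone index is reward-irrelevant).} This is the step I expect to be the main obstacle. The augmented policy $\tilde\pi^*$ may depend on $m$, while $M$ only admits policies on $\mathcal{S}$. I will argue as follows.

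First, in $\tilde M$ the reward depends only on $(s,a,s')$, and $\tilde P$ factors as $P(s'\mid s,a)$ times a deterministic update of $m$. By induction on the Bellman recursion, $Q^*_{\tilde M}((s,m),a) = Q^*_M(s,a)$ for every $m$: the $s$-marginal of the process is unaffected by $m$, and rewards never read $m$.

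Second, $\tilde\pi^*$ is optimal in $\tilde M$ by Step 1. So at every reachable augmented state it selects actions in $\arg\max_a Q^*_{\tilde M}((s,m),a) = \arg\max_a Q^*_M(s,a)$, which are optimal actions of $M$.

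The remaining subtlety is how to read ``also optimal in $M$'' when $\tilde\pi^*$ depends on $m$. I will read $\tilde\pi^*$ as a history-dependent policy in $M$; this is legitimate because $m_t$ is a deterministic function of $s_0,\dots,s_t$. A history-dependent policy that is greedy with respect to $Q^*_M$ at every step attains $V^*_M$, by the standard Bellman optimality argument. Therefore $\tilde\pi^*$ is optimal in $M$.

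If a Markov policy on $\mathcal{S}$ is wanted, I will add one more step: for each $s$, fix any $m$ and choose $\tilde\pi^*(\cdot\mid s,m)$. Since every such action is optimal in $M$, this yields an optimal stationary policy of $M$.
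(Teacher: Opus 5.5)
Your proposal is correct and follows the same two-stage reduction $\tilde M' \to \tilde M \to M$ as the paper: shaping with the combined potential $\phi(s)+\Psi(m)$ on the augmented space, then showing $Q^*_{\tilde M}((s,m),a)=Q^*_M(s,a)$ for every $m$ (you by induction on value iteration, the paper via a fixed point of the contraction $\tilde{\mathcal{T}}$ and uniqueness). Your reading of a possibly $m$-dependent $\tilde\pi^*$ as a history-dependent greedy policy in $M$ is more careful than the paper's claim that $\tilde\pi^*(s,m)=\pi^*(s)$ independently of $m$, which need not hold when $Q^*_M(s,\cdot)$ has ties.
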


\begin{proof}[Proof Sketch]
Let $\tilde{M}$ denote the unshaped counterpart of $\tilde{M}'$, i.e., $\tilde{M}'$
with $F_{\mathrm{MS}}\!=\!0$.
The proof follows a two-stage reduction $\tilde{M}' \to \tilde{M} \to M$.
\textit{Stage~1.} Setting $\Phi(\tilde{s})\!=\!\phi(s)+\Psi(m)$, the shaping
signal satisfies $F_{\mathrm{MS}} = \gamma\Phi(\tilde{s}_{t+1})-\Phi(\tilde{s}_t)$,
which is PBRS form~\citep{ng1999policy}.
Hence $\tilde{Q}^*_{\tilde{M}'}$ and $\tilde{Q}^*_{\tilde{M}}$ differ only by
$\Phi(\tilde{s})$, leaving the $\arg\max_a$ unchanged; every optimal policy in
$\tilde{M}'$ is optimal in $\tilde{M}$.
\textit{Stage~2.} Since $\tilde{R}$ ignores $m$ and
$\sum_{m'}\mathbb{P}(m'|s',m)=1$, the candidate
$\tilde{V}(s,m)\!=\!V_M^*(s)$ is a fixed point of the Bellman operator
$\tilde{\mathcal{T}}$.
Uniqueness of the $\gamma$-contraction gives
$\tilde{V}^*_{\tilde{M}}(s,m)\!=\!V_M^*(s)$, so $\tilde{\pi}^*(s,m)=\pi^*(s)$
independently of $m$.
Combining both stages completes the proof.
(\textit{Full proof in Supplementary Material~I.})
\end{proof}

\section{Hybrid Symbolic-Numerical Learning of Potential Functions}
\label{sec:method_discovery}

The structural reward formulation in Sec.~\ref{sec:methodology} transforms the
original optimization problem into a tractable, unconstrained one.
 We solve this using a bi-level optimization strategy over a demonstration dataset $\mathcal{D}_{\text{demo}}$. The goal is to find a symbolic structure $\phi$ and a parameter vector $\theta$ that maximize a surrogate performance objective $\mathcal{J}$ while ensuring generalization to unseen trajectories. The pipeline is shown in  Fig. \ref{fig: hybrid opt}.
\subsection{Problem Formulation}
Given a demonstration set partitioned into training and validation subsets, $\mathcal{D}_{\text{demo}} = \{ \mathcal{D}_{\text{train}}, \mathcal{D}_{\text{val}} \}$, we seek the optimal potential function $(\phi, \theta )$ by solving:
\begin{align}
\phi^* &= \arg\max_{\phi \in \Phi} \mathcal{J}(\phi, \theta^*(\phi); \mathcal{D}_{\text{demo}}) \\
\text{} \quad \theta^*(\phi) &= \arg\max_{\theta \in \Theta} \mathcal{J}( \theta; \phi,\mathcal{D}{_\text{train}})
\end{align}
where $\mathcal{J}$ is the multi-objective surrogate (Section~\ref{subsec: the Surrogate Objective}). In this formulation, the inner-loop optimization solves for the best numerical alignment $\theta$ for a given logic structure on $\mathcal{D}_{\text{train}}$, while the outer-loop  searches for a symbolic structure $\phi$ that generalizes  to both training set and held-out validation set.

\subsection{Optimization Dataset Preparation}
To ground the optimization in semantic task logic rather than visual features,
each demonstration in $\mathcal{D}_{\text{demo}}$ is preprocessed in three steps
before being passed to the bi-level optimizer.
Raw observations are first mapped to motion-flow representations following
Section~\ref{subsection:Flow-Generator}, replacing pixel-level inputs with
object-centric spatial relations.
We then use GPT-4.1~\citep{achiam2023gpt} to decompose each task into a sequence
of discrete symbolic subtasks and let GPT-4.1 annotate every trajectory with the
corresponding stage labels $\mu_t$; shuffling~\citep{ma2024vision} and majority
voting are applied to reduce annotation noise.
Finally, $\mathcal{D}_{\text{demo}}$ is split 60/40 into training and validation
subsets $\mathcal{D}_{\text{train}}$ and $\mathcal{D}_{\text{val}}$.
Crucially, the inner loop only observes $\mathcal{D}_{\text{train}}$, so the
outer loop is forced to select symbolic structures that generalize to held-out
trajectories rather than memorizing the training set.

\begin{algorithm}[t]
\caption{Hybrid Symbolic-Numerical Learning}
\label{alg:hybrid_offline_learning}
\begin{algorithmic}[1]
\REQUIRE Task $l$, LLM $\mathcal{L}$, Surrogate $\mathcal{J}$,
         Data $\{\mathcal{D}_{\text{train}}, \mathcal{D}_{\text{val}}\}$,
         Iterations $N$, Batch size $K$, Top-$m$ size $m$, Prompt $p_0$
\STATE $p \gets p_0$;\quad $v_{\text{best}} \gets -\infty$
\FOR{$i = 1$ \TO $N$}
    \STATE $\{\phi_k, \theta_k\}_{k=1}^K \sim \mathcal{L}(l, p)$
           \hfill \textit{// symbolic outer-loop}
    \FOR{$k = 1$ \TO $K$}
        \STATE $\theta_k^* \gets \textsc{BO}(\phi_k, \theta_k, \mathcal{J},
               \mathcal{D}_{\text{train}})$
               \hfill \textit{// numerical inner-loop}
        \STATE $v_k \gets \mathcal{J}(\phi_k, \theta_k^*;\,
               \mathcal{D}_{\text{demo}})$
    \ENDFOR
    \STATE $\mathcal{D}_{\text{top}} \gets
           \textsc{Top-}m\!\left(\{(\phi_k, \theta_k^*, v_k)\}_{k=1}^K\right)$
    \IF{$\max(v_k) > v_{\text{best}}$}
        \STATE $(\phi_{\text{best}}, \theta_{\text{best}}, v_{\text{best}})
               \gets \arg\max_{(\phi,\theta,v)\,\in\,\mathcal{D}_{\text{top}}} v$
    \ENDIF
    \STATE $p \gets p \oplus \textsc{Reflection}(\mathcal{D}_{\text{top}})$
\ENDFOR
\RETURN $\phi_{\text{best}},\, \theta_{\text{best}}$
\end{algorithmic}
\end{algorithm}

\subsection{The Surrogate Objective ($\mathcal{J}$)}
\label{subsec: the Surrogate Objective}
We define a multi-objective surrogate $\mathcal{J}(\phi, \theta; \mathcal{D}{_\text{train})} $ to measure the alignment of the learned potential function with  an ideal one. We seek to maximize:
\begin{equation}
\mathcal{J}(\phi, \theta; \mathcal{D}{_\text{train})} = \sum_{\tau \in \mathcal{D}_{\text{train}}} \left[ \lambda_1 \mathcal{C}_{\text{stage}} + \lambda_2 \mathcal{C}_{\text{prog}} + \lambda_3 \mathcal{C}_{\text{pbrs}} \right]
\end{equation}
where the components are defined as:

\paragraph{Stage alignment ($\mathcal{C}_{\text{stage}}$)}
This measures  the accuracy of the predicted subtasks $\hat{\mu}_t$ against VLM labels $\mu_t$:
\begin{equation}
\mathcal{C}_{\text{stage}}(\phi)
= \frac{1}{T} \sum_{t=1}^{T} \mathbf{1}\{\hat{\mu}_t = \mu_t\}.
\end{equation}
where $\mathbf{1}\{\cdot\}$ is the indicator function. A high stage-alignment score indicates that the symbolic structure has
correctly encoded the task's sequential logic.

\paragraph{Progress monotonicity ($\mathcal{C}_{\text{prog}}$)}
This  measures 
the Spearman correlation between $\phi(o_t)$ and normalized time  $t/T$:
\begin{equation}
\mathcal{C}_{\text{prog}}(\phi) = \mathrm{Corr}(\phi(o_t),\, t/T).
\end{equation}
Spearman correlation is preferred over Pearson because monotonicity, not
linearity, is desired for a progress proxy.
\paragraph{PBRS positivity ($\mathcal{C}_{\text{pbrs}}$)}
This term measures the frequency of non-negative shaping signals:
\begin{equation}
\mathcal{C}_{\text{pbrs}}(\phi)
= \frac{1}{T-1} \sum_{t=1}^{T-1}
  \mathbf{1}\{\gamma\phi(o_{t+1}) - \phi(o_t) \ge 0\}.
\end{equation}

\begin{figure}[t]
\begin{center}
\includegraphics[width=1\linewidth]{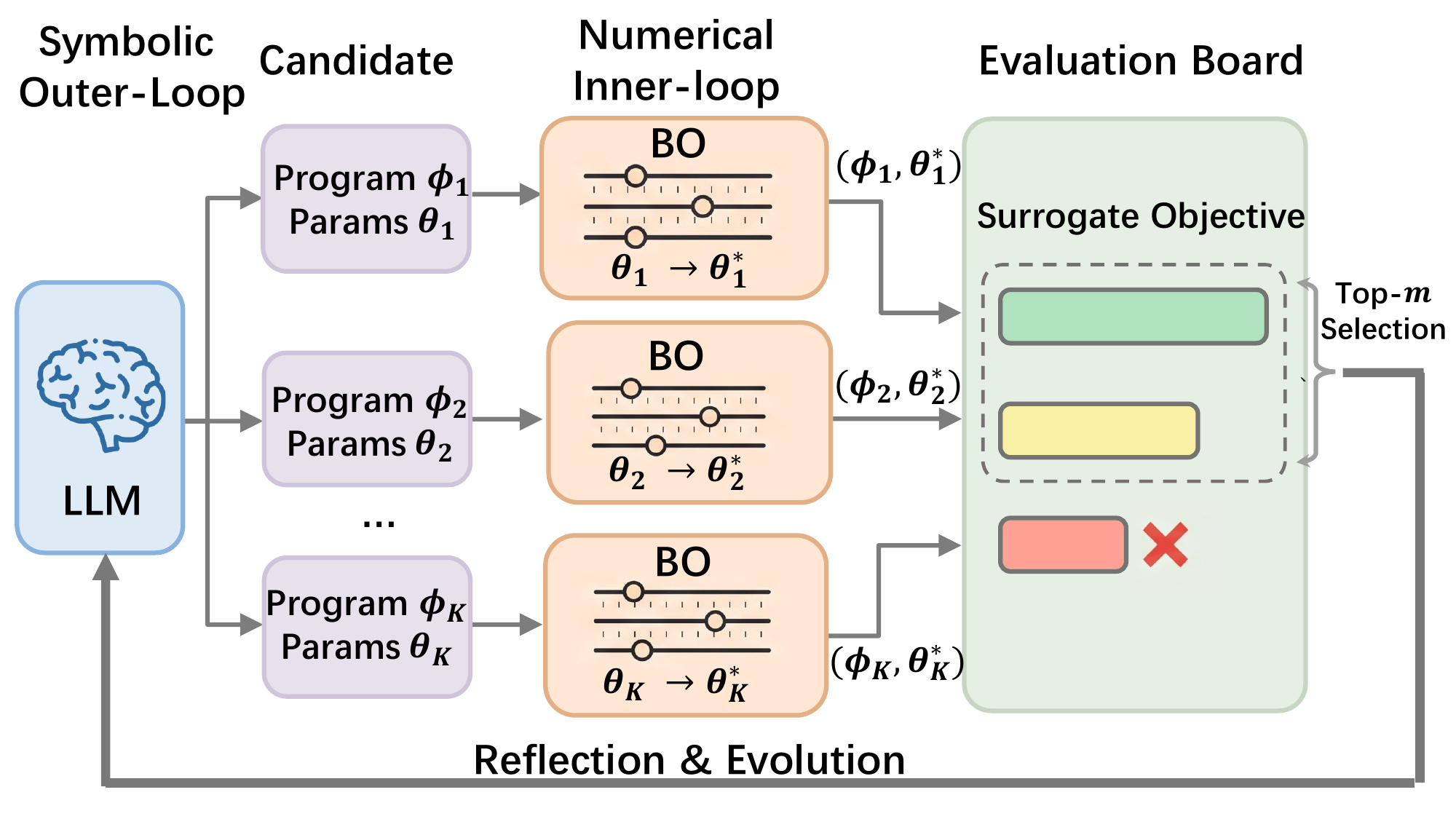}
\end{center}
\vspace{-0.6cm}
\caption{The Hybrid Symbolic-Numerical Learning Pipeline.
}
\label{fig: hybrid opt}
\vspace{-0.3cm}
\end{figure}

\subsection{Bi-Level Optimization Procedure}
The nested optimization is solved through a hybrid search strategy summarized
in Algorithm~\ref{alg:hybrid_offline_learning}.
\paragraph{Numerical inner loop}
For each candidate symbolic structure $\phi_k$, the lower-level problem is solved
with Bayesian Optimization~\citep{frazier2018tutorial} using the Upper Confidence
Bound (UCB) acquisition function~\citep{pmlr-v22-kaufmann12}:
\begin{equation}
\theta_k^* = \arg\max_{\theta_k}\;
\mathcal{J}(\theta_k;\, \phi_k, \mathcal{D}_{\text{train}}).
\end{equation}
The initial parameters are drawn directly from the LLM, which ensures they
lie within a physically plausible order of magnitude and gives the BO
optimizer a warm start rather than a random initialization.
\paragraph{Symbolic outer loop}
The LLM serves as the optimizer over the symbolic space $\Phi$.
After each batch of inner-loop optimization, the top-$m$ candidates by validation
score are collected into a reflection set $\mathcal{D}_{\text{top}}$.
Each candidate in $\mathcal{D}_{\text{top}}$ is summarized into structured
feedback comprising: the symbolic program $\phi_k$ with its optimized
parameters $\theta_k^*$; the surrogate scores on both $\mathcal{D}_{\text{train}}$
and $\mathcal{D}_{\text{val}}$; and a visualization of the potential trajectory
$p_t$ over the demonstration rollouts.
Given this feedback, the LLM identifies failure causes in the current
symbolic structures and revises the generated symbolic functions to improve logical
consistency and cross-trajectory invariance in the next batch.

\begin{figure*}[t]
\begin{center}
\includegraphics[width=0.92\linewidth]{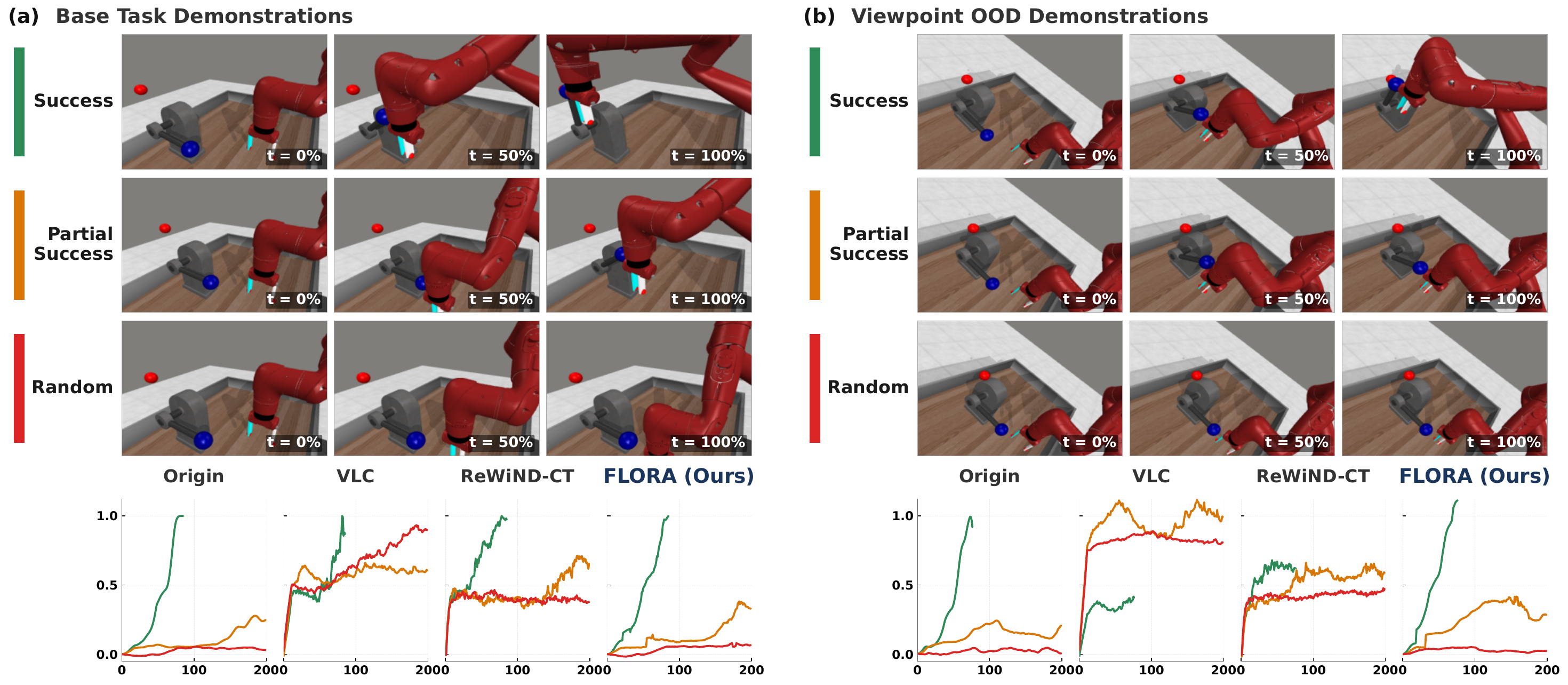}
\end{center}
\vspace{-0.5cm}
\caption{
Representative rollouts and reward curves on the Lever-Pull task under (a) base and (b) viewpoint-OOD settings.  Origin denotes the
default dense reward provided by the Meta-World simulator, serving as
the ground-truth reference. FLORA (Ours) maintains clear reward separation across success, partial-success, and random trajectories where baseline methods degrade.}
\label{fig: rewards image viewpoint OOD}
\vspace{-0.2cm}
\end{figure*}

\section{Experiments}
Experiments evaluate FLORA along four dimensions: reward quality
and OOD generalization in simulation (Sec. VI-A),
policy learning efficiency (Sec. VI-B),
real-world reliability 
(Sec. VI-C), and ablation analysis
(Sec. VI-D).

\subsection{Reward Quality and OOD Generalization}
 \begin{table}[t]
\centering
\caption{\textbf{Reward Evaluation Metrics.} Comparison of reward models in the training tasks and Position OOD Tasks, Viewpoint OOD and Object OOD Tasks.}
\vspace{-0.6em}
\small
\setlength{\tabcolsep}{4pt} 
\begin{tabular}{lccccc}
\toprule
\textbf{Metric} & \textbf{LIV} & \textbf{LIV-FT} & \textbf{VLC} & \textbf{ReWiND-CT} & \textbf{Ours} \\ 
\midrule
\multicolumn{6}{l}{\textbf{Training Task}} \\
\hspace{0.5em} Proc. Algn. $\rho\uparrow$  & -0.13 & 0.78 & 0.86 & 0.89 & \textbf{0.97} \\
\hspace{0.5em} Rew. RanK  $\uparrow$   & -0.03 & 0.42 & 0.01 & 0.53 & \textbf{0.80} \\
\hspace{0.5em} Rew. Diff. $\uparrow$      & -0.01 & 0.27 & 0.03 & 0.16 & \textbf{0.80} \\
\toprule
\multicolumn{6}{l}{\textbf{Position OOD}} \\
\hspace{0.5em} Proc. Algn. $\rho\uparrow$  & 0.13 & 0.55 & 0.65 & 0.61 & \textbf{0.85} \\
\hspace{0.5em} Rew. RanK  $\uparrow$   & -0.08 & 0.39 & 0.07 & 0.40 & \textbf{0.57} \\
\hspace{0.5em} Rew. Diff. $\uparrow$      & -0.01 & 0.17 & 0.10 & 0.03 & \textbf{0.46} \\
\midrule
\multicolumn{6}{l}{\textbf{Viewpoint OOD}} \\
\hspace{0.5em} Proc. Algn. $\rho\uparrow$  & 0.08 & 0.69 & 0.75 & 0.65 & \textbf{0.88} \\
\hspace{0.5em} Rew. RanK $ \uparrow$   & 0.07 & 0.38 & 0.10 & 0.41 & \textbf{0.67} \\
\hspace{0.5em} Rew. Diff. $\uparrow$      & 0.01 & 0.12 & 0.08 & 0.03 & \textbf{0.46} \\
\midrule
\multicolumn{6}{l}{\textbf{Object OOD}} \\
\hspace{0.5em} Proc. Algn. $\rho\uparrow$  & 0.04 & 0.67 & 0.61 & 0.67 & \textbf{0.81} \\
\hspace{0.5em} Rew. RanK $ \uparrow$   & 0.02 & 0.39 & 0.04 & 0.45 & \textbf{0.53} \\
\hspace{0.5em} Rew. Diff. $\uparrow$      & 0.00 & 0.16 & 0.02 & 0.04 & \textbf{0.32} \\
\bottomrule
\end{tabular}
\label{tab:evaluation_metrics}
\end{table}
Experiments are conducted on Meta-World~\cite{yu2020meta}, covering
four simple tasks shared with prior baselines for direct comparison,
and four multi-stage tasks that are among the most challenging in
Meta-World, to stress-test reward generalization beyond the
single- to two-substage settings evaluated in prior work.
For each task, the training set $\mathcal{D}_{\text{demo}}$ comprises  five  demonstrations  collected using a scripted policy in Meta-World for reward model learning. To evaluate generalization, we collect a test set $\mathcal{D}_{\text{test}}$ consisting of ten successful trajectories generated from novel random seeds, representing unseen spatial configurations.

Baselines include LIV~\cite{ma2023liv}, a vision-language reward
model pretrained on EpicKitchens~\cite{damen2020epic} and
fine-tuned on $\mathcal{D}_{\text{demo}}$ per task (LIV-FT);
Video-Language Critic (VLC) \citep{alakuijala2024video}, trained
on $\mathcal{D}_{\text{demo}}$; and ReWiND~\cite{zhang2025rewind},
pretrained on the Open X-Embodiment dataset $\mathcal{D}_{\text{open-x}}$ \citep{o2024open}  and a custom dataset containing 20 tasks with 5 demonstrations each. We evaluate a co-trained version ({ReWiND-CT}) that is trained on  both $\mathcal{D}_{\text{open-x}}$ and task-specific $\mathcal{D}_{\text{demo}}$ data for each task.

\paragraph{Process Alignment}
Process alignment measures how closely reward signals track actual task progression, quantified by Spearman's rank correlation against time on $\mathcal{D}_{\text{test}}$.
 As shown in Table \ref{tab:evaluation_metrics}, FLORA consistently outperforms all baselines in  process alignment. This superior performance on $\mathcal{D}_{\text{test}}$ demonstrates that FLORA  can successfully extrapolate to novel spatial configurations within the same task.

\paragraph{Policy Rollout Ranking and Discriminative Ability}
A robust reward model must not only award success but also accurately penalize failures. To test this discriminative ability, we construct an evaluation set by mixing: (i) 10 successful test trajectories ($\mathcal{D}_{\text{test}}$); (ii) 10 partial-success trajectories ($\mathcal{D}_{\text{PS}}$) generated by adding noise to the scripted policies; and (iii) 10 completely random trajectories ($\mathcal{D}_{\text{random}}$). We treat the  trajectory rank derived from Meta-World’s default dense rewards  as the ground-truth rank and  compare  rankings generated by  reward models against the  ground-truth rank using Spearman’s rank correlation. Additionally, we report the average  reward difference, which measures the numerical distance between the rewards assigned to successful vs. failed rollouts. 

As shown in Table \ref{tab:evaluation_metrics}, 
FLORA significantly outperforms the strongest baseline, achieving a 51\% improvement in rank correlation and a 196\% increase in reward margin. These results demonstrate FLORA’s superior ability to separate success from failure by focusing on semantic task progression rather than misleading visual artifacts.

\paragraph{OOD Generalization}
To evaluate the generalization ability of the reward models learned from  five demonstrations, we construct three distinct Out-of-Distribution (OOD) scenarios for each of the eight tasks: 
(i) {Position OOD}:  The initial positions of the object and gripper are shifted by 40\,cm, significantly exceeding the distribution of the training demonstrations; (ii) {Viewpoint OOD}:  The camera is rotated  by 30° relative to the reference viewpoint; (iii) {Object OOD}: The color and scale of task-relevant objects are modified.  All modifications are performed under the constraint that the robot gripper and objects remain visible to the camera.
 For each task variant, we construct corresponding $\mathcal{D}_{\text{test}}$, $\mathcal{D}_{\text{PS}}$ and $\mathcal{D}_{\text{random}}$ sets to evaluate the model’s progress alignment, policy ranking, and discriminative ability under distribution shift.

As  summarized in  Table \ref{tab:evaluation_metrics}, {FLORA} consistently outperforms all baseline models. While baseline reward models suffer significant performance degradation due to visual overfitting, {FLORA} maintains high process alignment and discriminative ability. 
To further assess reward quality, we visualize the reward curves produced by
each reward model for successful, partial-success, and random trajectories on 
the base and Viewpoint-OOD variants  of the Lever-Pull task.
As shown in Fig.~\ref{fig: rewards image viewpoint OOD}, 
while baselines such as LIV-FT and ReWiND fail to separate partial-success
from random trajectories under this distribution shift, FLORA maintains a
clear margin across all three trajectories.

\begin{figure}[t]
\begin{center}
\vspace{-0.2cm}
\includegraphics[width=0.8\linewidth]{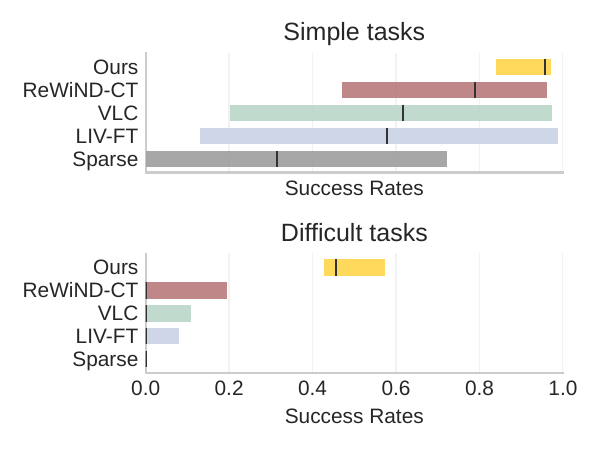}
\end{center}
\vspace{-0.7cm}
\caption{Meta-World Performance: We report interquartile means of success rates  with interquartile range in simple tasks and difficult tasks. }
\label{fig: SR simulation}
\vspace{-0.2cm}
\end{figure}

\subsection{Policy Learning Efficiency}
\paragraph{Default Tasks}
The reward model's effect on policy learning is assessed across eight Meta-World tasks, comprising four simple and four difficult tasks.
RLPD~\cite{ball2023efficient} serves as the base RL algorithm, maintaining identical hyperparameters across all experiments for a fair comparison. Baselines include LIV-FT, VLC, ReWiND-CT, and the default sparse binary signal. Each task is trained over three independent random seeds.  We report the Inter-Quartile Means (IQM) of success rates with interquartile range. For simple tasks, agents are trained for 100k timesteps, while agents on difficult tasks are trained for  200k timesteps. Detailed training curves  are provided in Supplementary Material III. 

As shown in Fig. \ref{fig: SR simulation}, FLORA significantly outperforms all  baselines. In simple tasks, FLORA achieves a 96\% success rate, representing a 22\% improvement over the strongest baseline. Crucially, in the  difficult tasks, FLORA maintains a 42\% success rate, nearly double the best baseline.  
These results are consistent with the reward quality metrics in Table~\ref{tab:evaluation_metrics}: methods with higher process alignment and ranking scores systematically yield faster policy convergence, validating our surrogate objective as a reliable predictor of  RL performance.

\begin{figure}[t]
\begin{center}
\includegraphics[width=0.85\linewidth]{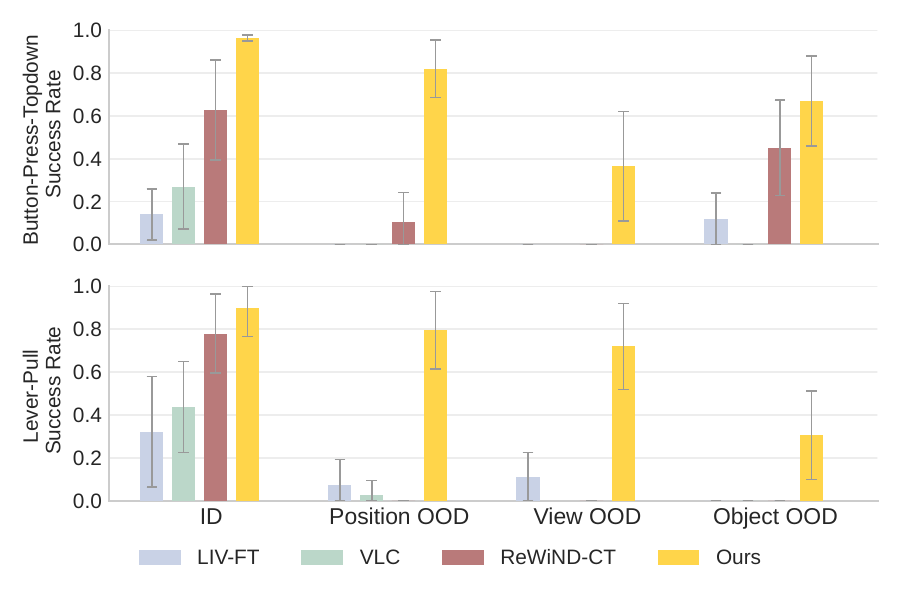}
\end{center}
\vspace{-0.7cm}
\caption{Generalization Performance: We reuse the trained reward models in three OOD tasks and report the average success rates across three
independent random seeds. }
\label{fig:OOD simulation}
\vspace{-0.3cm}
\end{figure}

\paragraph{OOD Tasks}
To further evaluate the generalization ability of our reward models, we reuse the trained reward models  in these three OOD tasks: Position OOD, Viewpoint OOD, and Object OOD. We select one simple (Button-Press-Topdown) task and one difficult task (Lever-Pull) as representative tasks for this analysis. The implementation details and learning curves are in the Supplementary Material III.
The results (Fig. \ref{fig:OOD simulation}) highlight the superior generalization ability of our reward model. While baseline reward models suffer from catastrophic failure in OOD settings due to visual distribution shifts, FLORA achieves high success rates in all OOD tasks.

\begin{figure}[h]
\begin{center}
\vspace{-0.5cm}
\includegraphics[width=0.95\linewidth]{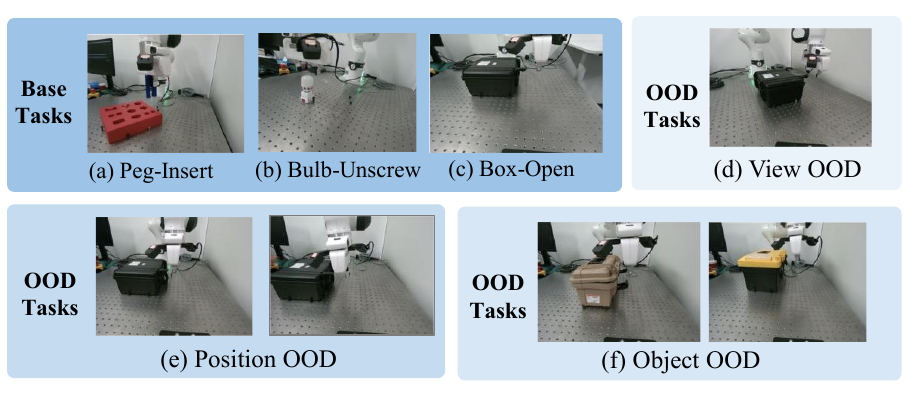}
\end{center}
\vspace{-0.7cm}
\caption{Real-world manipulation tasks and OOD variants. (a)–(c) Base tasks on the Franka arm. (d)–(f) OOD variants: viewpoint shift, position shift, and novel object instances.}
\label{fig: real world images}
\vspace{-0.6cm}
\end{figure}

\subsection{Real-world Experiments}
\paragraph{Policy Learning}
To evaluate the reliability of FLORA in the real world, we conduct experiments using a 7-DOF Franka Emika research arm \citep{haddadin2022franka}. We design three challenging manipulation tasks, as shown in Fig. \ref{fig: real world images}:
(i) {Peg-Insert}: Aligning and inserting a peg into a tight-tolerance hole, requiring millimeter-level precision and sustained contact.  
(ii) {Box-Open}:  Grasping and rotating a hinged lid to open the box, requiring stable contact and coordinated rotational motion.
(iii) {Bulb-Unscrew}: A complex, long-horizon task requiring the robot to grasp a bulb, execute multiple counterclockwise rotations to unscrew it, and successfully place it on the table.

Five demonstrations per task are provided to learn reward models.  As for policy initialization, we  provide 5 demonstrations for Peg-Insert and Box-Open, and 20 demonstrations for the long-horizon Bulb-Unscrew task to facilitate initial exploration.  We utilize RLPD \citep{ball2023efficient} as the base reinforcement learning algorithm. We  compare against  {ReWiND-CT}, the most powerful baseline in  simulation, and {Sparse Rewards}, a classifier-based reward signal $R \in \{0, 1\}$.  Full hardware details are provided in Supplementary Material IV.

As shown in Table \ref{tab:success_rates realworld}, FLORA exhibits superior sample efficiency and robustness: In the Peg-Insert and Box-Open tasks, FLORA reaches a near 100\% success rate within 20 minutes, doubling the learning speed of the strongest baseline ReWiND-CT.  In the complex Bulb-Unscrew task, FLORA is the only method capable of achieving success, reaching a 45\% success rate within one hour. All baselines fail to achieve a single successful rollout in this time budget.

\begin{table}[t]
\centering
\caption{Success rates  of three real-world experiments at 20 min, 40 min, and 60 min training time. }
\vspace{-0.6em}
\small
\begin{tabular}{llccc}
\toprule
\textbf{Time} & \textbf{Tasks} & Sparse &   ReWiND-CT &  Ours \\ 
\midrule
\multirow{3}{*}{{20 min}} & Box-Open & {0/20} & {8/20} & \textbf{20/20} \\
 & Peg-Insert & 0/20 &  5/20 & \textbf{20/20} \\
 & Bulb-Unscrew & \textbf{0/20} &  \textbf{0/20} & \textbf{0/20} \\
\midrule
\multirow{3}{*}{{40 min}} & Box-Open & {0/20} & \textbf{20/20} & \textbf{20/20} \\
 & Peg-Insert & 0/20 &  \textbf{20/20} & \textbf{20/20} \\
 & Bulb-Unscrew & 0/20 &  0/20 & \textbf{3/20} \\
\midrule
\multirow{3}{*}{{60 min}} & Box-Open & {0/20} & \textbf{20/20} & {20/20} \\
 & Peg-Insert & 8/20 &  \textbf{20/20} & \textbf{20/20} \\
 & Bulb-Unscrew & 0/20 &  0/20 & \textbf{9/20} \\

\bottomrule
\end{tabular}
\vspace{-0.4cm}
\label{tab:success_rates realworld}
\end{table}

 \begin{table}[h]
 \vspace{-0.3cm}
\centering
\caption{Success rates  of Box-Open task  at Position OOD, Viewpoint OOD and Object OOD at 30 min  training time.}
\vspace{-0.6em}
\small
\begin{tabular}{llccc}
\toprule
\textbf{} & \textbf{Tasks} & Sparse &   ReWiND-CT &  Ours \\ 
\midrule
\multirow{2}{*}{{Position OOD}} & position 1 & {0/20} & {10/20} & \textbf{20/20} \\
 &position 2  & 0/20 &  0/20 & \textbf{19/20} \\

\midrule
\multirow{1}{*}{{Viewpoint OOD}} & View1 & {0/20} & {2/20} & \textbf{15/20} \\

\midrule
\multirow{2}{*}{{Object OOD}} & Box1 & {0/20} & {0/20} & \textbf{18/20} \\
 & Box2 & 0/20 &  {0/20} & \textbf{17/20} \\
\bottomrule
\end{tabular}
\label{tab:success_rates OOD}
\end{table}

\paragraph{OOD Generalization Ability}
To evaluate the generality of our reward model in real-world experiments, we conduct experiments on three OOD task variants of the Box-Open task, as shown in  Fig. \ref{fig: real world images}:
 (i) {Position OOD}:
 Two levels of spatial perturbations relative to training: {Position 1} ($5 \times 5$\,cm shift) and {Position 2} ($10 \times 10$\,cm shift); (ii) {Viewpoint OOD}: A near 45-degree shift in camera extrinsics, significantly altering the visual perspective; (iii) {Object OOD}: Two novel box instances ({Box 1 and Box 2}) featuring different colors, textures, and scales, while sharing the same hinged mechanism. For each variant, we reuse the trained reward model and use it  to guide RL policy training for a fixed 30-minute budget.
 Table~\ref{tab:success_rates OOD} shows that FLORA maintains high success rates across all OOD scenarios. Notably, even under extreme viewpoint shifts and novel object geometries, FLORA enables the robot to achieve near-optimal performance within the 30-minute training budget. In contrast, the baseline models fail to provide accurate reward signals for novel scenarios, resulting in poor policy learning performance. 
 
 These results confirm that our method enables a single learned reward representation to  be reused in the diverse task variants in the real world.

\subsection{Ablation Study}

We conduct a series of ablation studies to  isolate the contribution of our core design choices.

\paragraph{Flow-Generator} 
The flow generator pipeline is evaluated along two axes: the contribution of object-centric normalization and the choice of point tracker. The full FLORA model is compared against a variant without the normalization module and a variant that substitutes the 3D tracker with a 2D alternative, CoTracker3~\cite{karaev2025cotracker3}.  Both process alignment and policy rollout ranking abilities are reported under nominal and positional OOD conditions.
The results in  Table \ref{tb: ablation} indicate that {object-centric normalization is one of the primary drivers of OOD generalization}. 
Furthermore, 3D point tracking outperforms the 2D alternative, as 3D motion flows align more naturally with the physical priors and semantic logic used by LLMs during reward learning.

\begin{table}[t]
\centering
\caption{\textsc{Ablation Study}: Subtracting ($-$) and adding ($+$) various FLORA components on Default and OOD tasks.}
\label{tab:ablation_narrow}
\vspace{-0.6em}
\newcolumntype{C}{>{\centering\arraybackslash}X}
\setlength{\tabcolsep}{2pt} 

\begin{tabularx}{\columnwidth}{lCCCC}
\toprule
\multirow{2}{*}{Model} & \multicolumn{2}{c}{(a) Default} & \multicolumn{2}{c}{(b) OOD-Position} \\
\cmidrule(lr){2-3} \cmidrule(lr){4-5}
 & \shortstack{Proc. Algn. \\ $\rho \uparrow$} & \shortstack{Rew. Rank \\ $ \uparrow$} & \shortstack{Proc. Algn. \\ $\rho \uparrow$} & \shortstack{Rew. Rank \\ $ \uparrow$} \\
\midrule
{Original FLORA} & {0.97} & 0.80 & {0.85} & {0.57} \\
\midrule
$-$ Object-Centric Norm & {0.97} & 0.78 & 0.85 & 0.47 \\

$-$ 3D Point Tracking   & 0.94   & 0.68   & 0.74   & 0.59   \\
$-$ Symbolic Function             & \textbf{0.99} & 0.60 & \textbf{0.96} & 0.52 \\
$-$ 3 demos             & {0.98} & 0.80 & 0.85 & 0.42 \\

$+$ 10 demos            & {0.98} & \textbf{0.84} & {0.88} & 0.58 \\
$+$ Human Subtask Labels & {0.97}   & \textbf{0.84}   & 0.85   & \textbf{0.68}   \\
\bottomrule
\end{tabularx}
\vspace{-0.3cm}
\label{tb: ablation}
\end{table}

\paragraph{Symbolic Function}  To evaluate the necessity of the symbolic potential function, the default setting is compared with a variant which replaces the symbolic potential function with a transformer network. The transformer network is trained   on the same motion-flow input with the same surrogate objective and evaluated under both ID and positional OOD conditions. The results in  Table \ref{tb: ablation} indicate that while 
the transformer network trivially interpolates the temporal progression of the 5 successful demonstrations, achieving a deceptive 0.99 Process Alignment), it catastrophically fails to penalize novel failure modes, dropping Reward Rank from $0.80$ to $0.60$ on ID. This proves that without the structural constraints of symbolic math, an unconstrained neural network cannot learn the strict boundaries of task failure from successful-only data and lacks the  structural inductive bias needed to extrapolate to novel configurations.
The symbolic function, by contrast, imposes task-relevant geometric constraints  that remain valid under distribution shift by construction. These results underscore that the symbolic potential is not merely a performance-enhancing component but a necessary structural prior for reliable OOD generalization.

\paragraph{Demonstration Numbers} 
To assess robustness to demonstration count, the default five-demonstration setting is compared against a low-data variant with three demonstrations and a higher-data variant with ten. Both process alignment and policy rollout ranking abilities are reported under nominal and positional OOD conditions.
The results in  Table \ref{tb: ablation} indicate that 
  performance gains  are marginal when going from 5 to 10 demonstrations, while reducing the number below five leads to a significant drop in generalization.

\paragraph{Subtask Labels}  
To assess the impact of annotation quality on reward synthesis, we compare our default VLM-generated subtask labels  against an oracle variant utilizing expert human labels. The VLM achieves a mean annotation accuracy of 84.7\% across the
task suite, validating its reliability as a proxy for human supervision.
 Potential functions are then generated for all eight Meta-World tasks using human labels and evaluated under both ID and positional OOD conditions.
The result is shown in  Table \ref{tb: ablation}.
Human labels yield a moderate performance boost, but at the cost of significant
manual effort; VLM-based labeling offers a practical, fully automatic
alternative with acceptable impact on performance.

\paragraph{PBRS-MS Module}
The contribution of the PBRS-MS module is isolated by comparing it against two ablated variants on the \textit{Lever Pull} and \textit{Peg Insert Side} tasks, both of which involve high-dimensional, multi-stage contact-rich manipulation. The two variants include Standard PBRS, which  applies a single unified potential across all task stages; and direct,  where $\Phi(s)$ is used as the reward signal  ($R = \Phi(s)$), rather than through the difference-based shaping mechanism.
As shown in Fig. \ref{fig:ablation rewards}, traditional PBRS suffers from training instability due to potential collapse in complex manipulation tasks. PBRS-MS significantly improves data efficiency and ensures stable convergence.

\begin{figure}[h]
\begin{center}
\vspace{-0.4cm}
\includegraphics[width=0.75\linewidth]{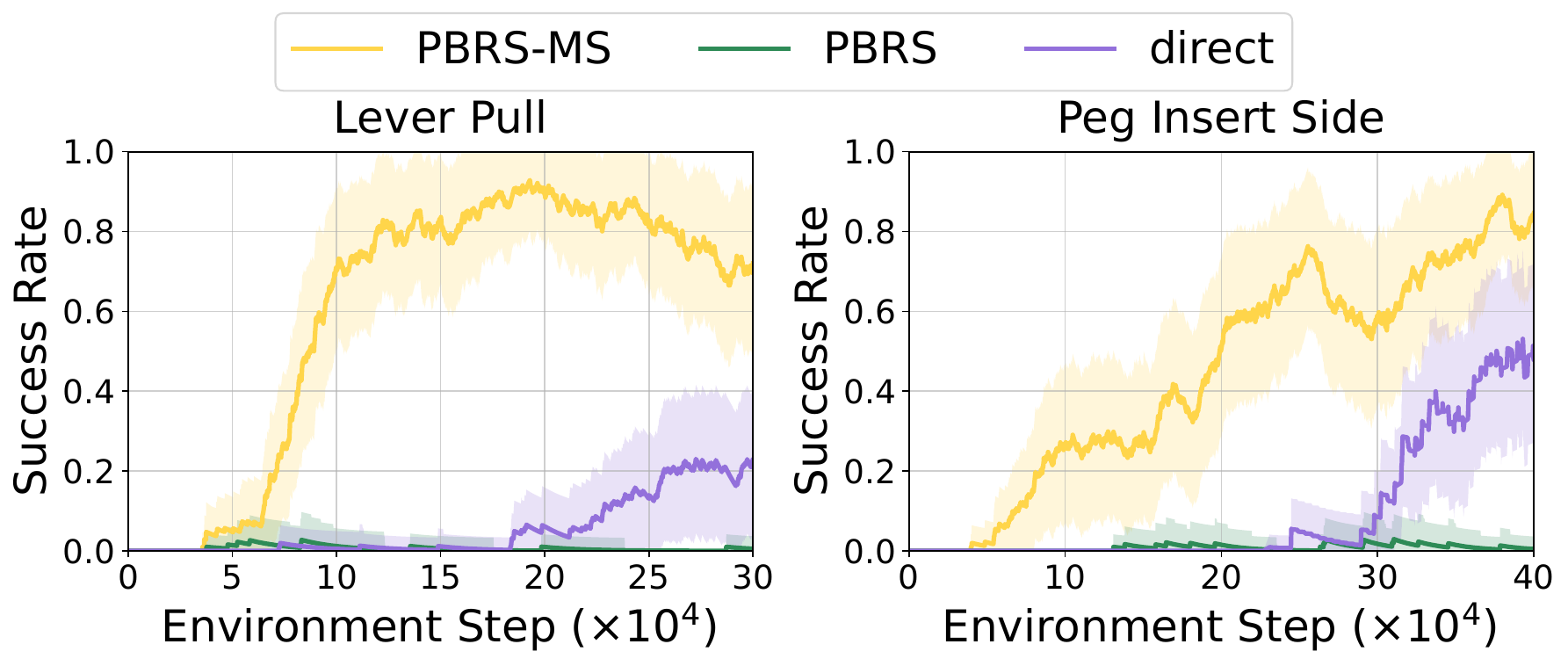}
\end{center}
\vspace{-0.5cm}
\caption{Ablation study on the PBRS-MS module.}
\label{fig:ablation rewards}
\vspace{-0.2cm}
\end{figure}

\begin{table}[h]
\centering
\vspace{-0.3cm}
\caption{Ablation on the optimization procedure.}
\label{tab:ablation_offline}
\vspace{-0.6em}
\setlength{\tabcolsep}{3pt}
\small
\begin{tabular}{@{}ll cccc@{}}
\toprule
Task & Variant & $C_\text{prog}$ & $C_\text{stage}$ & $C_\text{pbrs}$ & $\mathcal{J}$ \\
\midrule
\multirow{4}{*}{\textit{Lever-Pull}}
 & LLM Refl. + BO  & 0.92{\tiny$\pm$0.01} & \textbf{0.90}{\tiny$\pm$0.00} & \textbf{0.94}{\tiny$\pm$0.00} & \textbf{9.29}{\tiny$\pm$0.04} \\
 & LLM Refl. only  & 0.82{\tiny$\pm$0.21} & 0.68{\tiny$\pm$0.17} & 0.67{\tiny$\pm$0.14} & 7.31{\tiny$\pm$1.58} \\
 & BO only          & \textbf{0.96}{\tiny$\pm$0.01} & 0.78{\tiny$\pm$0.08} & 0.91{\tiny$\pm$0.02} & 9.14{\tiny$\pm$0.03} \\
 & Direct           & 0.44{\tiny$\pm$0.06} & 0.44{\tiny$\pm$0.00} & 0.58{\tiny$\pm$0.04} & 5.10{\tiny$\pm$0.06} \\
\midrule
\multirow{4}{*}{\textit{Peg-Insert}}
 & LLM Refl. + BO  & \textbf{0.90}{\tiny$\pm$0.05} & \textbf{0.88}{\tiny$\pm$0.01} & \textbf{0.89}{\tiny$\pm$0.07} & \textbf{8.95}{\tiny$\pm$0.15} \\
 & LLM Refl. only  & 0.85{\tiny$\pm$0.07} & 0.66{\tiny$\pm$0.01} & 0.73{\tiny$\pm$0.10} & 7.69{\tiny$\pm$0.26} \\
 & BO only          & 0.89{\tiny$\pm$0.06} & 0.75{\tiny$\pm$0.09} & 0.88{\tiny$\pm$0.08} & 8.74{\tiny$\pm$0.06} \\
 & Direct           & 0.84{\tiny$\pm$0.01} & 0.67{\tiny$\pm$0.01} & 0.58{\tiny$\pm$0.09} & 6.89{\tiny$\pm$0.47} \\
\bottomrule
\vspace{-0.3cm}
\end{tabular}
\end{table}

\paragraph{Hybrid Optimization Method}
The hybrid optimizer is compared against three reduced variants: LLM reflection alone, Bayesian Optimization alone, and direct selection of the best LLM-generated candidate without further optimization. Each variant is run five times; performance is measured by the three surrogate metrics $\mathcal{C}{\text{stage}}$, $\mathcal{C}{\text{prog}}$, and $\mathcal{C}_{\text{pbrs}}$, with results reported in Table~\ref{tab:ablation_offline}.
The results demonstrate that our hybrid approach, combining LLM Reflection with BO, achieves  superior performance.

These ablation  results show that object-centric normalization, 3D point tracking  and hybrid optimization improve performance and  OOD robustness, PBRS-MS ensures stable convergence; each component contributes to the overall performance.

\section{Discussion and Future Work}
The core claim of this work is that behavioral invariants sufficient
for real-world reward generalization can be distilled from as few as
five demonstrations, provided the reward representation is grounded
in object-centric motion rather than raw pixels and its structure is
constrained by symbolic inductive biases. The following sections
examine the evidence for this claim, characterize the boundaries of
the approach, and identify directions for future work.

\subsection{Sources of OOD Robustness}

We attribute FLORA's OOD robustness to three complementary inductive biases, each targeting a different source of visual overfitting. (1) {Input-level invariance.}
The Flow-Generator discards  appearance information and retains only the relative motion of task-relevant objects, so visually distinct scenes that share the same spatial topology produce
similar flow representations.
(2)
{Function-level invariance.}
Symbolic programs explicitly represent physical laws and logical constraints
that hold across task variants.  This restricted hypothesis space inherently prevents
the few-shot overfitting that plagues neural reward models trained on only
five demonstrations.
(3)
{Learning-level invariance.}
The  bi-level optimization  explicitly rewards potential functions that generalize to held-out validation trajectories, implicitly enforcing behavioral invariance without explicit domain randomization.

The ablation results corroborate this decomposition.
 Removing object-centric
normalization or substituting 2D for 3D point tracking degrades positional
OOD ranking (Table~\ref{tb: ablation}), confirming the contribution of
input-level invariance.   However, input-level invariance alone is insufficient: substituting the symbolic potential with a neural surrogate trained on the
same motion-flow input  immediately degrades OOD reward ranking (0.57 $\rightarrow$ 0.52) and ID discriminative ability (0.80 $\rightarrow$ 0.60), as empirically validated in Table~\ref{tb: ablation}.  
This confirms that the rigid geometric priors embedded in programmatic logic are the decisive factor preventing few-shot memorization.
Finally, the sharp
generalization collapse under reduced spatial coverage
(Table~\ref{tab:ablation_5 demonstrations}) indicates that cross-trajectory
diversity is essential for the optimization to discover
invariant programs.

\begin{table}[t]
\centering
\caption{Robustness of the 5-demonstration setting (Lever-Pull).}
\vspace{-0.6em}
\label{tab:ablation_5 demonstrations}
\setlength{\tabcolsep}{3pt}
\small
\begin{tabular}{@{}ll|ccc|ccc@{}}
\toprule
& & \multicolumn{3}{c|}{Default Task} & \multicolumn{3}{c}{Position OOD} \\
\cmidrule(lr){3-5} \cmidrule(lr){6-8}
Factor & Setting
& \shortstack{Proc.\\Algn.$\rho$\,$\uparrow$}
& \shortstack{Rew.\\Rank$\uparrow$}
& $\Delta$\scriptsize(\%)
& \shortstack{Proc.\\Algn.$\rho$\,$\uparrow$}
& \shortstack{Rew.\\Rank$\uparrow$}
& $\Delta$\scriptsize(\%) \\
\midrule
 & Ref & 1.00 & 0.89 & -- & 0.96 & 0.92 & -- \\
\midrule
\multirow{3}{*}{\textit{Seed}}
 & seed 1 & 1.00 & 0.89 & +0.1 & 0.99 & 0.92 & +1.3 \\
 & seed 2 & 0.99 & 0.88 & $-$1.0 & 0.92 & 0.94 & $-$0.9 \\
 & seed 3 & 1.00 & 0.91 & +0.9 & 0.97 & 0.91 & $-$0.4 \\
\midrule
\multirow{3}{*}{\textit{Cov.}}
 & Full & 1.00 & 0.88 & $-$0.6 & 0.96 & 0.92 & +0.1 \\
 & 1/2  & 1.00 & 0.92 & +1.5 & 0.89 & 0.71 & \textbf{$-$15.3} \\
 & 1/8  & 1.00 & 0.89 & +0.2 & 0.80 & 0.17 & \textbf{$-$48.6} \\
\midrule
\multirow{3}{*}{\textit{Qual.}}
 & 1 noisy & 1.00 & 0.89 & $-$0.1 & 0.97 & 0.90 & $-$0.4 \\
 & 3 noisy & 0.99 & 0.87 & $-$1.6 & 0.95 & 0.93 & $-$0.1 \\
 & 5 noisy & 1.00 & 0.75 & \textbf{$-$7.5} & 1.00 & 0.13 & \textbf{$-$40.0} \\
\bottomrule
\end{tabular}
\vspace{-0.3cm}
\end{table}

\subsection{Robustness of the Five-Demonstration Setting}
FLORA learns an invariant symbolic reward function from as few as five demonstrations. To quantify the robustness of this setting, we conduct ablation studies on the Lever-Pull task in Meta-World, examining seed choice, spatial coverage, and demonstration quality. Results are summarized in Table~\ref{tab:ablation_5 demonstrations}.

\paragraph{Seed Sensitivity}
The five demonstrations used in the main experiments were collected via Meta-World's scripted policy with a fixed random seed. Repeating data collection with three additional random seeds yields performance variations of at most $\pm$1.29\% on ID tasks and similarly small changes under positional OOD evaluation, confirming that the method's behavior is not an artifact of a particular seed choice.

\paragraph{Spatial Coverage}
We construct three demonstration sets that cover approximately the full task range, one-half, and one-eighth of the range. FLORA maintains strong ID performance even at 1/8 coverage. However, positional OOD performance decreases as coverage shrinks, suggesting that broad spatial variability is important for  OOD generalization.

\paragraph{Demonstration Quality}
Sensitivity to demonstration quality is assessed by injecting action
noise into the scripted policy to produce suboptimal trajectories, and 
varying their fraction across three settings.  The experimental results show that FLORA is resilient to imperfect data: even with a majority of suboptimal demonstrations (3 noisy + 2 clean), it maintains strong OOD performance. Performance collapses only when all five demonstrations are noisy, confirming that bi-level optimization effectively filters out suboptimal behaviors.

\subsection{Limitations and Future Work}

\paragraph{Monotonicity Assumption}

A core assumption of PBRS-MS is that task progress is monotonically
non-decreasing along the optimal trajectory.  This assumption is well-suited
to staged manipulation tasks.  However, it breaks down for tasks requiring \emph{reversible subtasks}.
For example, when a
peg jams during insertion, the robot must partially withdraw before
re-approaching at a corrected angle. The local PBRS term
$\gamma\phi(s_{t+1})-\phi(s_t)$ is negative during withdrawal, directly
penalizing the required backward motion, while the global milestone term
remains zero since $m_t$ is non-decreasing by construction. The combined
signal therefore discourages  optimal recovery strategies.  A similar
issue arises in tasks admitting multiple valid subtask orderings (e.g.,
assembling four table legs in any permutation): demonstrations following a
single ordering cause the learned potential to suppress alternative,
equally valid sequences. Extending PBRS-MS with mode-dependent or
order-invariant milestone formulations is a natural direction for future
work.

\paragraph{Perception and Computation}
As a vision-based method, FLORA is sensitive to occlusion: when the objects of interest are partially or fully occluded, the reward signal may degrade. Incorporating multimodal inputs such as proprioception or tactile feedback is a natural direction to improve robustness. Additionally, the current implementation requires a one-time, per-task-class specification of the objects of interest, which takes about one minute; as open-vocabulary detection models mature, this step could be fully automated.
Finally, the offline optimization stage currently takes 1--8 hours
per task, dominated by GPT-4.1 inference and BO optimization. As
reasoning-capable LLMs improve in code generation, this cost is
expected to fall substantially. In the longer term, distilling the
full pipeline into a lightweight end-to-end model would eliminate
the explicit symbolic search at deployment time.

\section{Conclusion} 
\label{sec:conclusion}

 We introduced FLORA, a framework for learning invariant symbolic reward functions from as few as five demonstrations. By combining object-centric motion flow representations, the PBRS-MS reward formulation, and hybrid LLM-BO optimization, FLORA successfully extracts task-level behavioral invariants from a few demonstrations. Across simulation and real-world manipulation experiments, FLORA provides dense and reliable reward signals, improves downstream policy learning, and enables zero-shot reuse across position, viewpoint, and object variations. These results suggest that symbolic, flow-grounded reward learning is a promising direction for building reusable reward functions for open-world robotic manipulation.

\bibliographystyle{IEEEtran}
\bibliography{references}

\vfill

\end{document}